\newcommand{\mpara}[1]{\medskip\noindent{\bf #1}}
\newcommand{\lymph}{\textsc{Lymph} }
\newcommand{\diabetes}{\textsc{Diabetes} }
\newcommand{\wdbc}{\textsc{Wdbc} }
\newcommand{\heart}{\textsc{Heart} }
\newcommand{\adult}{\textsc{Adult} }
\newtheorem{RQ}{RQ}
\begin{document}
\title{Achieving differential privacy for $k$-nearest neighbors based outlier detection by data partitioning}


%
%
 \author{Jens Rauch\inst{1} \and
 Iyiola E. Olatunji\inst{2} \and
 Megha Khosla\inst{2}}
 \authorrunning{J. Rauch et al.}
%
 \institute{Health Informatics Research Group, University of Applied Sciences Osnabr\"uck, Germany
 \email{j.rauch@hs-osnabrueck.de} \and
 L3S Research Center, Leibniz University Hannover, Germany\\
 \email{\{iyiola,khosla\}@l3s.de}}
\maketitle              
\begin{abstract}
When applying outlier detection in settings where data is sensitive, mechanisms which guarantee the privacy of the underlying data are needed. The $k$-nearest neighbors ($k$-NN) algorithm is a simple and one of the most effective methods for outlier detection. So far, there have been no attempts made to develop a differentially private ($\epsilon$-DP) approach for $k$-NN based outlier detection. Existing approaches often relax the notion of $\epsilon$-DP and employ other methods than $k$-NN. We propose a method for $k$-NN based outlier detection by separating the procedure into a fitting step on reference inlier data and then apply the outlier classifier to new data. We achieve $\epsilon$-DP for both the fitting algorithm and the outlier classifier with respect to the reference data by partitioning the dataset into a uniform grid, which yields low global sensitivity. Our approach yields nearly optimal performance on
real-world data with varying dimensions when compared to the non-private versions of $k$-NN.

\keywords{Differential privacy  \and Outlier detection. \and $k$-Nearest neighbors}
\end{abstract}

\section{Introduction}
\label{intro}
Outliers are observations in a dataset, which deviate considerably from the remainder of the data and might therefore be indicative of a different data generating mechanism \cite{williams2002}. Detection of outliers has important applications in medicine, finance and administrative monitoring, since it allows to identify anomalous or suspicious data for later scrutiny. However, in many use cases such as medical screenings, the data is inherently sensitive. This calls for privacy preserving mechanisms that protect the data of individuals from being revealed when releasing a data analytic model that relies on these data, as is the case for outlier detection models. One powerful and widely accepted approach for releasing data statistics or the trained model while concealing the information about individuals in the employed dataset is \emph{differential privacy} (DP)  \cite{dwork2008}. We specifically focus on $\epsilon$-DP in this work where $\epsilon$ quantifies the privacy guarantee.

Under $\epsilon$-DP, one adds a sufficient level of noise during analysis so that it is impossible to infer the presence or absence of single observations in the underlying dataset from the final model. Perturbing model parameters by adding noise evidently leads to a decreased model performance, which depends on the algorithm for model training, as well as the specific dataset. It is therefore of vital importance to study the impact on model precision when applying $\epsilon$-DP to outlier detection algorithms.

Despite the extensive and growing body of methods in both outlier detection and differentially private data mining, there is only scant literature on how to perform outlier detection with privacy guarantees. In this paper, we therefore combine two pertinent approaches from both domains, namely $k$-nearest neighbor ($k$-NN) based detection of outliers and $\epsilon$-DP. Such a conjoint approach has not been been examined before.

The existing works, which have proposed solutions for privacy preserving outlier detection either relax the concept of $\epsilon$-DP \cite{asif2020guide,bohler2017privacy,lui2015outlier} or use simplified definitions of outliers \cite{okada2015a,bittner2018using,kearns2016private} in order to obtain reasonable detection performance. It is common to all these previous approaches that they make no conceptual distinction between the training of an outlier classifier and identifying outliers in a dataset. With our approach, instead, we could directly build upon the original definitions of $\epsilon$-DP and $k$-NN based outlier detection. We could achieve this by separating the procedure into building a $k$-NN classifier, which precedes the identification of outliers. The algorithm fulfils $\epsilon$-DP and outputs a $k$-NN based outlier classifier, which can be interactively queried to determine outliers in new data. The data used for the fitting procedure remains private as guaranteed by $\epsilon$-DP.
Moreover, this approach is computationally more efficient than the approaches, which do not separate into training and outlier detection, when perpetually applied to new incoming data \cite{hamlet2017}.

While there are a number of advanced outlier detection methods, it has been shown recently, that in many cases simple $k$-NN queries do perform as well as the sophisticated approaches \cite{campos2016}. When using $k$-NN, an observation's distance to its nearest neighbors is a measure of its \say{outlierness}. In fact, a number of the advanced methods based on density estimates, connectivity or angular distance make use of $k$-NN queries, such as Local Outlier Factors (LOF) or Kernel Density Estimation Outlier Score (KDEOS) \cite{tang2017,schubert2014,kriegel2010}.
Being a simple and nonetheless very effective method, $k$-NN is an ideal candidate for outlier detection under $\epsilon$-DP. To the best of our knowledge, this is the first study which examines an approach of differentially private $k$-NN based outlier detection.

\mpara{Our Contributions.} We propose a novel differential private algorithm for outlier detection. The $\epsilon$-DP mechanism works by building a uniform grid on a reference dataset, partitioning it into cells and perturbing the count of elements in the cells. The resulting model of reference cell counts can then be queried against with new data of interest, to determine whether any of the new data points should be considered outliers. Our approach guarantees $\epsilon$-DP for the reference data and allows for online querying of new incoming data \cite{hamlet2017}. We experimentally evaluate our approach on five benchmark datasets in comparison to four non-private baselines. To summarize, we make the following contributions.

\begin{enumerate}
\item We develop a novel yet simple  algorithm for outlier detection with provable differential privacy gurantees. Our approach is scalable for high dimensional datasets as well as continuous domains.
\item We showcase the effectiveness of our approach by comparing our approach with several of the non-private baselines. Our approach yields close to optimal performance on  real-world datasets of varying dimensionality, when compared to the baselines.
\end{enumerate}
We have published the anonymized version of our code.\footnote{\url{https://anonymous.4open.science/r/fcf95211-3361-440a-986a-966bd13b1ede/}}

\section{Related Work}
\label{related}
Okada \textit{et al.} \cite{okada2015a} presented differentially private queries for fixed distance-based outlier analysis. However, their work is targeted at outlier characteristics, rather than outlier detection. It uses a fixed-distance definition of outliers, whereas our work is based on $k$-NN based outlier definition. Moreover, they set out from the relaxed definitions of $\epsilon-\delta$-DP and smooth sensitivity instead of the stricter global sensitivity, which we use. The two types of queries they provide include (i) count queries which returns the number of detected outliers in a given subspace and (ii) discovery of top-subspaces containing a large number of outliers. 

In another line of work, Lui and Pass \cite{lui2015outlier} introduced a new  privacy notion for outliers as a generalization of $\epsilon$-DP with the goal of granting higher privacy protection to outliers than inliers. Their definition, also called tailored differential privacy (TDP) measures the privacy of an individual by how much of an outlier the individual is. Therefore, their work is focused on releasing histograms where outliers are provided more privacy. Moreover, their goal is to define privacy from the perspective of outlier rather than differentially private outlier detection.

For differentially private outlier analysis, several relaxation for $\epsilon$-DP have been proposed. They include anomaly-restricted DP \cite{bittner2018using}, protected DP \cite{kearns2016private}, relaxed sensitivity \cite{bohler2017privacy}, and sensitive privacy \cite{asif2019accurately}. 
All $\epsilon$-DP relaxation except for \cite{asif2019accurately} assume that the outlier model is data-independent. 
In fact, Bittner \textit{et al.} \cite{bittner2018using} assumes that the database only contains one outlier while Kearns \textit{et al.} \cite{kearns2016private} is specific to anomalies searching in social networks. B{\"o}hler \textit{et al.} \cite{bohler2017privacy} provided a relaxation of global sensitivity by assuming that there is a separating boundary between outlier records from the inlier records. Moreover, their data perturbation does not require a privacy budget distributed over a series of queries. 
However, all the relaxed DP methods for outlier detection except for sensitive privacy \cite{asif2019accurately} are not applicable to our work since our outlier model is data-dependent (distance based). 
 
Practically, outlier models are data-dependent where the outlyingness of a record is defined by its distance to the other records in the database. Sensitive privacy \cite{asif2019accurately} generalizes the above notions of privacy and provides a formal privacy framework of $\epsilon$-DP for anomaly detection. This ensures that the outlier identification mechanism is unaffected by the presence or the absence of the individual's record in the database. Their approach first defines the the notion of sensitive record and constructs an appropriate sensitive neighborhood graph that can be used for outlier detection. However, sensitive privacy cannot be directly applied to our problem. 

Another approach of satisfying DP for outlier detection is by addition of noise to the aggregate statistics. Fan and Xiong \cite{fan2013differentially} proposed anomaly detection framework from continual aggregates of user statistics with $\epsilon$-DP guarantee by adding Laplace noise. A filtering algorithm which generate posterior estimates takes the perturbed aggregate and if it passes the sensitivity analysis, it is released else it is corrected, then released. This increases the accuracy of released aggregates. However, their method only releases a one-dimensional time series with differential privacy and outlier detection is applied to the released data as a post process.

Recently, differentially private $k$-NN algorithm has been proposed by Gursoy \textit{et al.} \cite{gursoy2017differentially} which works by first converting $k$-NN classifiers to private radius neighbors ($r-N$) classifiers. The $r-N$ classifier utilizes majority vote among neighbors within a given radius. In order to add noise to satisfy $\epsilon$-DP, sensitivity analysis is performed over a region overlap graph that determines the overlaps among the radii $r$ of the test instances. Hence, given a test instance, the task is to find an accurate $r$ without leaking the distances among the training instances or data distribution. However, the choice of $r$ needs to be carefully chosen. For example, adding Laplacian noise with variance greater than 1 to a significantly low radius, say $r$=0.01 would completely destroy its accuracy and lead to extremely inaccurate results. Another drawback of their approach is that the data owner must be online and available to perform classification on a querier's demand since the privacy budget is distributed over a series of queries. A non-interactive algorithm was proposed to alleviate the drawback but fails when the dimensionality of data exceedingly high or $\epsilon$ is very small. However, our approach is scalable for high dimensional data. Moreover, their approach does not focus on designing a differential private outlier detection algorithm.

\section{Problem definition and basics of differential privacy}
\label{sec:basics}
After stating the problem definition, we will briefly revisit the key concepts of $k$-NN based outlier detection and $\epsilon$-DP.

\subsection{Problem definition}
\label{sec:problem}
From the outlier analysis perspective, there are two classes of samples within a dataset. The first class consists of inliers which form the majority of data in the dataset. The inliers are samples that stem from a common, arbitrary generating distribution. The second class are the remaining samples which are considered to be the outliers. The outliers follow (potentially several) different generating distributions, distinct from the inlier distribution. 

While we have no prior knowledge regarding the generating distributions, we possess a reference dataset $X$, which consists of inliers only and can be thought of as a training set. The \emph{task} is to give a $\epsilon$-DP algorithm that yields an outlier classifier (a scoring function) based on $X$. Suppose we are given a set of new data points $Y$, the outlier classifier should return an outlier score $s_y$ for each $y \in Y$  which reflects a degree of outlierness. In the following we describe the $k$-NN based approaches for computing outlier scores.

\subsection{$k$-NN outlier detection}
\label{sec:od}
$k$-NN based outlier detection works by considering those data points as outliers, which have a large distance to their neighbors in the reference dataset. There are two variants. The basic $k$-NN algorithm considers the distance to the $k$th nearest neighbor as an outlier score, while the weighted $k$-NN algorithm considers the total distance to all $k$ neighbors.

The rationale behind using $k$-NN for outlier detection is that inliers are assumed to lie in regions with relatively high density, whereas outliers are to be found in the less densely populated regions. Points in high density regions have small distances to their neighbors compared to points in low density regions. The parameter $k$ can be made larger to take into account that outliers themselves might appear as higher density clusters, albeit smaller in number. 

In the following, let $X$ be the reference dataset and $y$ a point in the test set $Y$, for which we want to obtain an outlier score $s_y$.

\subsubsection{Basic $k$-NN.}
\label{sec:knn}
Formally, for a given data point $y$ the basic $k$-NN algorithm \cite{ramaswamy2000} computes its outlier score $s_y$ as the euclidean distance to its $k$th nearest neighbor $x_k$ in $X$,
\begin{equation}
s_y = ||x_k - y||_2.
\label{eq:knn}
\end{equation}

\subsubsection{Weighted $k$-NN.}
In the weighted $k$-NN algorithm \cite{angiulli2002}, instead of only considering the distance to the $k$th nearest neighbor, the outlier score is computed as the sum of the distances to \emph{all} $k$ nearest neighbors $x_1, \dots, x_k$ of $y$ in $X$,
\begin{equation}s_y = \sum_{i=1}^k ||x_i - y||_2.
\label{eq:wknn}
\end{equation}
This is called \emph{weighted} $k$-NN, since the neighbors of $y$ are weighted by their distance to $y$. In the literature, it is also known as aggregate $k$-NN \cite{schubert2014}, which more appropriately reflects the summation of neighbor distances. However, we stick to the term weighted, because our approach will in fact lead to weights, when applied to this algorithm.

\subsection{Differential Privacy}
The definition of differential privacy is based on neighboring datasets, i.\,e. datasets having the same number of elements, which differ by exactly one element. 
\begin{definition}[Differential Privacy]
  An algorithm $\mathcal{A}$ is said to be ($\epsilon$-)differ\-entially private ($\epsilon$-DP), when for all neighboring datasets $D$, $D'$ and for all possible outcomes $O$ of $\mathcal{A}$, 
\begin{equation}
Pr\,[\mathcal{A}(D) = O] \leq e^\epsilon \cdot Pr\,[\mathcal{A}(D')=O].
\end{equation}
\end{definition}

The parameter $\epsilon$ is the privacy budget. Intuitively, for small $\epsilon$ a differentially private algorithm $\mathcal{A}$ will output the same result for both datasets with high probability. 

The Laplace Mechanism \cite{dwork2008} can be used to design differentially private algorithms. It relies on the global sensitivity of a query function $f: D \rightarrow \mathbb{R}^m$.

\begin{definition}[Global sensitivity]
  The global sensitivity $\Delta f$ of a function $f$ is defined as the maximal worst case change in $f$ when applied to two neighboring datasets $D$, $D'$,
\begin{equation}
\Delta f := \underset{D,\,D'}{\max}||f(D) - f(D')||.
\end{equation}
\label{eq:sens}
\end{definition}
A differentially private algorithm can be constructed from a query function $f: D \rightarrow \mathbb{R}^m$ with the Laplace Mechanism as follows \cite{dwork2008}. Draw i.i.d random variables $\eta_i$, $i=1,\dots,m$ from the Laplace distribution $Lap(0,\epsilon^{-1}\Delta f)$ and add these to the output of $f$,  
\begin{equation}
\mathcal{A}(f,\,D) = f(D) + (\eta_1 ,\dots, \eta_m).
\end{equation}
The algorithm $\mathcal{A}$ is now a $\epsilon$-DP version of query $f$.
Certain privacy guarantees are also ensured under composition and post-processing, as is stated in the following result \cite{mcsherry2009}.

\begin{theorem}[Composition and Post-Processing]
Let $D$ be any dataset, $\mathcal{A}_1$ and $\mathcal{A}_2$ be two algorithms that satisfy $\epsilon_1$-DP and $\epsilon_2$-DP, respectively. Then the following properties hold for $\mathcal{A}_1$ and $\mathcal{A}_2$:
\begin{enumerate}
    \item Releasing the output of $\mathcal{A}_1(D)$ and $\mathcal{A}_2(D)$ satisfies $(\epsilon_1 + \epsilon_2)$-DP \emph{(Sequential composition)}.
    \item Given another \emph{disjoint} dataset $D'$, releasing  $\mathcal{A}_1(D_1)$ and   $\mathcal{A}_2(D_2)$ satisfies $\max(\epsilon_1, \epsilon_2)$-DP \emph{(Parallel Composition)}.
    \item Any post-processed output of $\mathcal{A}_1(D)$ still satisfies $\epsilon_1$-DP \emph{(Post-processing immunity)}.
\end{enumerate}
\label{eq:comp}
\end{theorem}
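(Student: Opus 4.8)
The plan is to prove the three parts separately, in each case reducing to the $\epsilon$-DP definition applied to the natural \emph{combined} mechanism. Throughout I will adopt the standard convention that the internal randomness of $\mathcal{A}_1$ and $\mathcal{A}_2$ (and of any post-processing map) is mutually independent; for continuous output spaces such as those produced by the Laplace Mechanism, the expressions $Pr[\mathcal{A}(D)=O]$ should be read as probability densities, or probabilities of measurable events, which does not affect any of the arguments.

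For sequential composition, I would consider the mechanism $\mathcal{B}(D) := (\mathcal{A}_1(D), \mathcal{A}_2(D))$ and fix neighboring datasets $D, D'$ together with an outcome $O = (O_1, O_2)$. By independence of the two mechanisms' coins, $Pr[\mathcal{B}(D)=O] = Pr[\mathcal{A}_1(D)=O_1]\cdot Pr[\mathcal{A}_2(D)=O_2]$. Applying the $\epsilon_1$-DP and $\epsilon_2$-DP guarantees to the two factors and multiplying gives $Pr[\mathcal{B}(D)=O] \le e^{\epsilon_1} e^{\epsilon_2}\, Pr[\mathcal{A}_1(D')=O_1]\, Pr[\mathcal{A}_2(D')=O_2] = e^{\epsilon_1+\epsilon_2}\, Pr[\mathcal{B}(D')=O]$, which is exactly $(\epsilon_1+\epsilon_2)$-DP.

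For parallel composition, the key observation is that since $D_1$ and $D_2$ are disjoint, changing a single record in the combined input affects only one of the two parts and leaves the other part untouched. Concretely, I would fix a neighboring pair of combined inputs and split into two cases according to which part the differing record lies in. If it lies in the part feeding $\mathcal{A}_1$, then the $\mathcal{A}_2$-component is literally identical on both inputs, while the $\mathcal{A}_1$-component contributes at most a factor $e^{\epsilon_1}$; the symmetric case yields $e^{\epsilon_2}$. In either case the ratio of joint probabilities is bounded by $e^{\max(\epsilon_1,\epsilon_2)}$, giving $\max(\epsilon_1,\epsilon_2)$-DP.

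For post-processing immunity, let $g$ be any (possibly randomized) map applied to the output of $\mathcal{A}_1$ with randomness independent of $\mathcal{A}_1$. For neighboring $D, D'$ and any measurable event $S$ in the range of $g$, I would write $Pr[g(\mathcal{A}_1(D))\in S] = \sum_O Pr[g(O)\in S]\, Pr[\mathcal{A}_1(D)=O]$ (an integral in the continuous case), bound $Pr[\mathcal{A}_1(D)=O] \le e^{\epsilon_1}\, Pr[\mathcal{A}_1(D')=O]$ termwise, and pull the constant out to recover $e^{\epsilon_1}\, Pr[g(\mathcal{A}_1(D'))\in S]$. The arguments are all elementary; the only points that require care are the explicit handling of the independence of internal coins (and, if one wanted the stronger adaptive form of sequential composition, replacing the product of probabilities by a product of conditionals) and the reduction of randomized post-processing to the above by conditioning on $g$'s randomness. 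I expect that bookkeeping to be the only genuine obstacle.
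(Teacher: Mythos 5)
Your three arguments are the standard ones and are essentially correct; note, however, that the paper does not actually prove this theorem --- it states it as a known result imported from McSherry (2009), so there is no in-paper proof to compare against. Your sequential-composition and post-processing arguments (factor the joint probability using independence of the coins, respectively marginalize over the intermediate output and bound termwise) are exactly the textbook proofs and need no changes beyond the measure-theoretic caveats you already flag. The one point deserving more care is parallel composition: your claim that \emph{a single changed record touches only one of the two parts} is automatic under the add/remove notion of neighboring datasets used by McSherry, but the paper defines neighbors as equal-size datasets differing by the \emph{replacement} of one element. If $D_1$ and $D_2$ arise from partitioning a single dataset by a condition on the records (as they do in the paper's grid construction), a replacement can move the differing record across the partition boundary, so that both $\mathcal{A}_1$'s and $\mathcal{A}_2$'s inputs change and the naive bound degrades to $\epsilon_1+\epsilon_2$; moreover each part then changes by an addition or a removal rather than a replacement, so the per-part DP guarantee must itself be stated in the add/remove sense for the argument to apply. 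Your case split is valid for the reading in which $D_1$ and $D_2$ are independently held datasets and a neighbor perturbs a record within one of them, which is a defensible reading of the (somewhat loosely worded) statement, but you should state explicitly which convention you are using.
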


\section{Our approach}
\label{sec:approach}
\begin{center}
\begin{figure}
\centering
\begin{tikzpicture}[]
\begin{scope}[black!40]
 \foreach \X in {0,...,2}
 {\foreach \Y in {0,...,2}
 {\draw (\X,\Y,0) -- ++(0,0,2);
 \draw (\X,0,\Y) -- ++(0,2,0);
 \draw (0,\X,\Y) -- ++(2,0,0);}}

\node[draw,circle,inner sep=.5pt,fill] at (-.1, .2, .13) {};
\node[draw,circle,inner sep=.5pt,fill] at (-.4, .15, .13) {};
\node[draw,circle,inner sep=.5pt,fill] at (.1, .1, .1) {};
\node[draw,circle,inner sep=.5pt,fill] at (.3, -.5, .4) {};
\node[draw,circle,inner sep=.5pt,fill] at (-.5, -.6, -.44) {};
\node[draw,circle,inner sep=.5pt,fill] at (-.3, -.34, -.44) {};

\node[draw,circle,inner sep=.5pt,fill] at (-.1, 1.2, .13) {};
\node[draw,circle,inner sep=.5pt,fill] at (-.2, 1.115, .13) {};
\node[draw,circle,inner sep=.5pt,fill] at (.1, 1.2, .1) {};
\node[draw,circle,inner sep=.5pt,fill] at (.3, .5, .4) {};
\node[draw,circle,inner sep=.5pt,fill] at (-.5, .6, -.44) {};
\node[draw,circle,inner sep=.5pt,fill] at (-.3, .74, -.44) {};
\end{scope}
\begin{scope}[red]
\node[draw,circle,inner sep=.5pt,fill] at (1.1, 1.1, 0) {};
\end{scope}
\node at (.5, -1.5, 0) {(a)};
\end{tikzpicture} 
\hspace{1cm}
\begin{tikzpicture}[]
\begin{scope}[black!40]
 \foreach \X in {0,...,2}
 {\foreach \Y in {0,...,2}
 {\draw (\X,\Y,0) -- ++(0,0,2);
 \draw (\X,0,\Y) -- ++(0,2,0);
 \draw (0,\X,\Y) -- ++(2,0,0);}}

\end{scope}
\begin{scope}[blue]
\node[draw,circle,inner sep=.7pt,fill] at (-.25, -.2, -.25) {};
\node[draw,circle,inner sep=.7pt,fill] at (-.25, .8, -.25) {};
\end{scope}
\begin{scope}[red]
\node[draw,circle,inner sep=.5pt,fill] at (1.1, 1.1, 0) {};
\end{scope}
\node at (.5, -1.5, 0) {(b)};
\end{tikzpicture} 
\caption{Illustration of the grid modification for $b=2$, $d=3$. The sub-cubes depict the data partition grid. (a) The reference data $X$ in grey and a test point $y$ in red. Two cells contain six data points each. (b) Centroids of the grid cells with weight $w_i=6$ for both populated cells and $w_i=0$ for the other cells.}
\label{fig:lattice-2}
\end{figure}
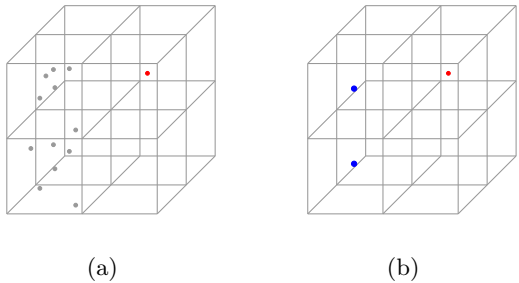
\end{center}

Our differentially private versions of $k$-NN and weighted $k$-NN (w$k$-NN) rely on data partitioning in order to allow querying the reference dataset $X$ with a low global sensitivity. Our approach has been inspired by Su \textit{et al.} \cite{su2016}, who used a similar partitioning procedure for $\epsilon$-DP $k$-means clustering. We partition the dataset by choosing a number $b$ and splitting each dimension into $b$ equally spaced intervals. This results in a uniform data grid on top of the reference dataset, so that data points are being assigned to grid cells. In the following, we refer to this procedure as \emph{grid} modification of (w)$k$-NN. 

Instead of considering the exact location of points in the reference set $X$ to determine an outlier score for $y$, our grid modifications treat the points of each cell as if they were all located at the cell's centroid. Essentially, our algorithm then only considers the centroids, weighted by the number of points in the respective cell, to determine the outlier scores by Equations \ref{eq:knn} and \ref{eq:wknn}. Our procedure is illustrated in Figure \ref{fig:lattice-2}, where two data partitions contain six points each. To determine the outlier score of the red point with respect to the reference data, instead of considering the raw location of each $x\in X$, we treat the centroids of all cells as data points with weights according to the number of elements in the respective cell as shown in Figure \ref{fig:lattice-2}(b). The formal approach is given in Section \ref{sec:grid}. These grid modification will subsequently be made $\epsilon-DP$ (Section \ref{sec:dpgrid}). For better comparison, we will analyze both the non-private and $\epsilon-DP$ versions of the grid modification.

\subsection{Preprocessing}
\label{sec:pre}
Before applying outlier detection methods, datasets should be normalized and centered, since this can improve performance drastically \cite{kandanaarachchi2020}.
In particular, for a given $d$-dimensional dataset $X$  with $x_{ij}$ representing the $j$th dimension/feature value of the $i$th element, we first scale each dimension $j$ to the unit interval by $\alpha_j$ (maximum absolute value), i.\,e.
\begin{equation}
    \alpha_j = \underset{i=1,\dots,N}{\max}|x_{ij}|.
\end{equation}
We then perform a linear mapping from the resulting interval $[-1,1]$ onto the unit interval. Next we center each dimension by subtracting the respective mean value $\bar{x}_j$. We obtain a centered unit hypercube. The scaling and centering parameters $\alpha_j$ and $\bar{x_j}$ are retained for later processing of the test data. Coordinates of data points in the test data that lie outside the unit interval after transformation are mapped to $0$ or $1$, depending on whether they are negative or greater than 1, respectively. 

\subsection{Grid modification of $k$-NN and weighted $k$-NN }
\label{sec:grid}
For our approach, we choose a grid parameter $b$ and construct a regular grid on the dataset by splitting each dimension into $b$ equidistant intervals, resulting in a grid partition with a total of $d^b$ cells, each being a hypercube with edge length $\frac{1}{b}$. 
Let $q_f(c)$ denote the number of data points in each cell $c$ (count query).
To answer a $k$-NN query, a trusted server with access to the original dataset $X$, will perform Algorithm \ref{alg:dpknn}. 

Our algorithm first looks up the grid cell $c_y$ in the reference dataset, in which a data point of interest $y$ resides (line \ref{alg:lookup}). It will then traverse all neighboring cells $C_y$ of data point $y$ in the order of their respective distance to $y$ (where distance is measured from the centroid of each neighboring cell $c \in C_y$; lines \ref{alg:traverseforin}--\ref{alg:traverseforend}). The algorithm terminates either when the cumulative number of elements in the traversed cells exceeds the threshold value $k$ (line \ref{alg:terminatek}), or when a given maximal neighbor depth $\delta_{\max}$ has been reached (line \ref{alg:traverseforend}). In case of our grid modification of basic $k$-NN, it will return the distance from point $y$ to the last visited neighbor cell (line \ref{alg:scoreknn}). In case of our grid modification of w$k$-NN, it will return the weighted sum of all distances of the visited cells; the weights beeing the count queries $q_f(c)$ (line \ref{alg:scorewknn}).

In the next subsection we will formally describe the mechanism, by which our grid modifications are made $\epsilon$-DP.

\IncMargin{1em}
\begin{algorithm}[h]
\DontPrintSemicolon
\SetKwInOut{Input}{Input}\SetKwInOut{Output}{Output}
\Input{preprocessed reference dataset $X\in \mathbb{R}^{n\times d}$, a new data point $y \in \mathbb{R}^{d}$, partitioning parameter $b\in \mathbb{N}$, parameter $k\in \mathbb{N}$, maximal neighbour depth $\delta_{\max}$, $weighted \in \{0,1\}$}
\Output{outlier score $s_y$}
\Begin{
$c_y \longleftarrow$ grid cell, in which $y$ resides\;\label{alg:lookup}
$C_y \longleftarrow$ neighboring cells of $c_y$ with $||\bar{c} - \bar{c}_y||_1 \leq \delta_{\max}, \forall c \in C_y$, where $\bar{c}$ denotes the centroid of a cell \;
\For{$c\in C_y$}{
\tcp{Collect distances to point $y$ to neighbor cells for sorting}
$dist_{y}(c) \longleftarrow ||y - \bar{c}||_1$ \;
}
$\widetilde{C_y} \longleftarrow C$ sorted by $dist_{y}(c_i)$ in ascending order\; 
$Q \longleftarrow 0$
$s_y  \longleftarrow 0$ \;
\For{$c\in \widetilde{C_y}$ \label{alg:traverseforin}}{
\tcp{$Q$ will keep track of how many points (cell count) we have visited}
$Q \longleftarrow Q + q_f(c)$\; \label{alg:Q}
$dist_c \longleftarrow ||\bar{c} - \bar{c}_y||_1$\;
\eIf{weighted}{
$s_y  \longleftarrow s_y + q_f(c) \cdot dist_c$ \tcp*[r]{Sum weighted distances}
\label{alg:scorewknn}}{$s_y  \longleftarrow dist_c$ \tcp*[r]{Keep last distance only} \label{alg:scoreknn}} 
\If{$Q \geq k$}{break \tcp*[r]{Terminate when threshold $k$ exceeded} \label{alg:terminatek}}
\label{alg:traverseforend}}
\Return $s_y$
}
\caption{Grid modification of (w)$k$-NN outlier detection \label{alg:dpknn}}
\end{algorithm}
\DecMargin{1em}

\subsection{Privacy mechanism for our approach}
\label{sec:dpgrid}
The count queries $q_f$ in our grid modifications of (w)$k$-NN can be perturbed according to the Laplace-Mechanism to fulfil $\epsilon$-DP. To this end, we first examine the global sensitivity of the count query function $q_f$, which returns the number of elements in a given cell. The result is given in the following Lemma.

\begin{lemma}
\label{lemma}
Let $D$ be any dataset, $C_D$ a partition of the domain of $D$ into cells and $q_f: C_D \rightarrow \mathbb{N}_0$ the count query, which returns the number of elements of $D$ $c\in C_D$. Then, the global sensitivity of the count query function  \begin{equation}
    \Delta q_f = 1.
\end{equation}
\end{lemma}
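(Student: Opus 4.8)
The plan is to work straight from Definition~\ref{eq:sens} (global sensitivity) together with the convention, fixed just before Definition~1, that $D$ and $D'$ are neighbors precisely when they have the same size and differ in exactly one element; equivalently, $D' = (D \setminus \{x\}) \cup \{x'\}$ for some record $x \in D$ and some replacement $x' \notin D \setminus \{x\}$. First I would make explicit what object we are differentiating: for a fixed cell $c \in C_D$ the quantity $q_f(c)$ is a scalar-valued function of the dataset, namely the number of records of $D$ lying in $c$, so the sensitivity to bound is $|q_f(D)(c) - q_f(D')(c)|$ maximized over all neighboring pairs and over all cells $c$.

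Next I would run the elementary case analysis on the location of the two records that distinguish $D$ from $D'$. Since the cells form a partition of the domain, each of $x$ and $x'$ lies in exactly one cell, so for the fixed cell $c$ there are four cases: (i) both $x,x' \in c$; (ii) $x \in c$, $x' \notin c$; (iii) $x \notin c$, $x' \in c$; (iv) neither lies in $c$. In cases (i) and (iv) the count of $c$ is unchanged, so the difference is $0$; in case (ii) it drops by exactly $1$ and in case (iii) it rises by exactly $1$. Hence $|q_f(D)(c) - q_f(D')(c)| \le 1$ for every $c$ and every neighboring pair, which gives $\Delta q_f \le 1$. To obtain equality I would exhibit a witness pair: take any $D$ with at least one record $x$ in a cell $c$ and let $D'$ be obtained by relocating $x$ to any other cell $c' \neq c$ (possible whenever the partition has more than one cell); then $q_f(D)(c) - q_f(D')(c) = 1$, so $\Delta q_f \ge 1$, and combining the two inequalities yields $\Delta q_f = 1$. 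I would close with the one-line consequence that, because the per-cell subsets of $D$ are disjoint, perturbing each cell count with independent $Lap(0,\epsilon^{-1})$ noise and releasing the perturbed counts is $\epsilon$-DP by the Laplace mechanism and parallel composition (Theorem~\ref{eq:comp}) — exactly what Algorithm~\ref{alg:dpknn} relies on.

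There is no deep difficulty here: the statement is a routine sensitivity computation, so the only thing that needs care is bookkeeping around the neighbor convention. In particular one must read $q_f$ as a \emph{per-cell} (scalar) count rather than as the full histogram vector: under the \say{same-size, one-substitution} notion of neighbors a substitution can change two cell counts at once (one down, one up), so the $\ell_1$ sensitivity of the entire count vector would be $2$, not $1$. Keeping the lemma — and its downstream use — at the granularity of a single cell, and appealing to parallel composition over the disjoint cells, is precisely what holds the sensitivity, and hence the injected noise, at $1$.
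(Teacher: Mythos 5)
Your argument is correct and essentially identical to the paper's: the paper also fixes a single cell and observes that under the substitution notion of neighbors the count changes by $|\mathbf{1}_{\{x\}\in c} - \mathbf{1}_{\{x'\}\in c'}|\in\{0,1\}$, which is just a compressed form of your four-case analysis. The only additions on your side --- the explicit witness pair showing $\Delta q_f \ge 1$, and the remark that $q_f$ must be read per cell (since the $\ell_1$ sensitivity of the full histogram under substitution would be $2$, with parallel composition over disjoint cells doing the rest) --- are sound and, if anything, make explicit a point the paper leaves implicit.
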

\begin{proof}
Let $D$, $D'$ be two neighboring datasets, then there exists an $x \in D$, which is replaced by $x' \in D'$, so that $D\setminus{\{x\}}=D'\setminus{\{x'\}}$. Therefore, for any $c \in C_D$ we have a corresponding $c' \in C_{D'}$, with  $c\setminus{\{x\}}=c'\setminus{\{x'\}}$ and hence $|q_f(c)-q_f(c')|=\big||c| - |c'|\big| = |\mathbf{1}_{\{x\}\in c} - \mathbf{1}_{\{x'\}\in c'}|$, which is either $0$ or $1$. \qed
\end{proof}
Thus, we obtain a differentially private count query $q^*_f(c)$ by adding Laplace noise $\eta_c \sim Lap\left(\frac{1}{\epsilon}\right)$,
\begin{equation}
q_f^*(c) := q_f(c) + \eta_c.
\end{equation}
Note that $q_f^*$ can take negative values, which however is desired, since otherwise the  distribution of cell counts would be skewed towards positive values \cite{su2016}. The noise variables $\eta_c$ will be realised only once per cell $c$, to ensure privacy for repeated cell querying (see Theorem \ref{eq:dpguarantee}), which is indicated by the index $c$.

To answer a $k$-NN outlier detection query under $\epsilon$-DP, a trusted server with access to the original unperturbed dataset will perform the grid modifications of (w)$k$-NN (Algorithm \ref{alg:dpknn}), albeit replacing $q_f$ by the perturbed count query $q^*_f$ in lines \ref{alg:Q} and \ref{alg:scorewknn}. The server will retain all perturbed cell counts $q^*_f(c)$, once realised, and output the retained value, in case a cell is queried repeatedly in order to prevent leakage of the true value. 

\begin{theorem}
Replacing $q_f$ by $q_f^*$ in Algorithm 1 will give an $\epsilon$-DP version of our grid modification, provided that the random noise is realised only once giving $\eta_c$ for each cell $c$ and then being kept constant.
\label{eq:dpguarantee}
\end{theorem}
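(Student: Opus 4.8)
The plan is to build the statement from three ingredients already established in the paper: the unit global sensitivity of the count query, the Laplace mechanism, and the composition/post-processing properties of Theorem~\ref{eq:comp}. First I would use Lemma~\ref{lemma}, which gives $\Delta q_f = 1$, so that each perturbed count $q_f^*(c) = q_f(c) + \eta_c$ with $\eta_c \sim \mathrm{Lap}(1/\epsilon)$ is an $\epsilon$-DP release of the count of that single cell. The next step is to argue that releasing the \emph{entire} vector $(q_f^*(c))_{c \in C_D}$ still costs only $\epsilon$ in total, not $\epsilon$ per cell: the grid cells partition the data domain, so $\{\, D \cap c : c \in C_D \,\}$ is a partition of $D$ into pairwise disjoint subdatasets, and $q_f(c)$ depends only on $D \cap c$. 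Applying the Laplace mechanism independently to each cell and releasing all outputs therefore falls under parallel composition (Theorem~\ref{eq:comp}, item~2), yielding $\max_c \epsilon = \epsilon$-DP. A change of one record between neighbouring $D, D'$ may move a point from one cell to another and thus alter two counts, but this is precisely the situation parallel composition handles, and it is consistent with the bound $|q_f(c)-q_f(c')| \in \{0,1\}$ used in the proof of Lemma~\ref{lemma}.

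Next I would invoke post-processing. For any fixed query point $y$, the output $s_y$ produced by Algorithm~\ref{alg:dpknn} with $q_f$ replaced by $q_f^*$ is computed from (i) the public parameters $y, b, k, \delta_{\max}, weighted$; (ii) the grid geometry and cell centroids, which depend only on $b$ and the fixed unit hypercube and so carry no information about $X$; and (iii) the perturbed counts. Hence $s_y$ is a deterministic function of the $\epsilon$-DP release from the previous paragraph, and post-processing immunity (Theorem~\ref{eq:comp}, item~3) gives $\epsilon$-DP for a single query.

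Finally I would extend this to an arbitrary interactive sequence of queries $y_1, y_2, \dots$, which is where the hypothesis that the noise is realised once becomes essential. Because each $\eta_c$ is drawn a single time and then frozen, the whole transcript of answers is still a deterministic function of the one noisy release $(q_f^*(c))_c$, so post-processing immunity keeps the entire interaction $\epsilon$-DP with no accumulation. Had fresh noise been sampled per query, sequential composition (Theorem~\ref{eq:comp}, item~1) would force the budget to grow linearly in the number of queries, and I would point this out as the contrast that motivates freezing the noise.

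I expect the part requiring the most care to be this accounting: making precise that the simultaneous release of all cell counts is charged $\epsilon$ exactly once via the partition structure, and that freezing the noise turns the full interactive protocol into a pure post-processing of that single release. A secondary caveat worth flagging is that the statement implicitly treats the preprocessing of Section~\ref{sec:pre} (the scaling factors $\alpha_j$ and means $\bar{x}_j$, which do depend on $X$) as given or public; a fully end-to-end privacy analysis would have to account for those quantities separately or assume the normalisation bounds are known a priori.
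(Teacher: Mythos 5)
Your proposal is correct and follows essentially the same route as the paper's own proof: Lemma~\ref{lemma} plus the Laplace mechanism makes each noisy count $\epsilon$-DP, parallel composition over the disjoint grid cells keeps the total cost at $\epsilon$, post-processing immunity covers the computation of $s_y$ from the noisy counts, and freezing each $\eta_c$ ensures repeated queries add nothing to the budget. Your framing of the whole interactive transcript as post-processing of a single noisy release of all cell counts is, if anything, slightly cleaner than the paper's per-query argument, and your caveat about the data-dependent preprocessing parameters $\alpha_j$, $\bar{x}_j$ is a legitimate point the paper does not address.
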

\begin{proof}

The Laplace Mechanism guarantees that count query $q_f^*$ is $\epsilon$-DP by Lemma \ref{lemma}. Algorithm 1 returns outlier score $s_y$, which is dependent on the distance between cell centroids, $dist_c$  as well as $Q$ (the variable that tracks how many points have been visited; see line 15) and $\delta_{\max}$ (termination conditions). The maximum  depth $\delta_{\max}$ is constant. All distances $dist_c$ are independent of the dataset $D$, but their composition (or last value in case of basic $k$-NN) is dependent on $Q$. However, $Q$ is a parallel composition of $q_f^*$ for disjoint cells and by Theorem \ref{eq:comp} it is $\epsilon$-DP. By the post-processing property in Theorem \ref{eq:comp} the computation of $dist_c$ is then also $\epsilon$-DP. Finally, repeated querying of the same cell will result in the same output, since the noise variable  $\eta_c$ is realised only once for each cell, so that no further information is gained, in case of sequential composition of Algorithm 1 on the same cell ($\epsilon_i=0, \forall i\geq 2$). \qed

\end{proof}

\section{Experiments}
\label{sec:exp}
We designed our experiments to investigate the following research questions:

\begin{RQ}
\label{rq:mod}
 How does our non-private grid modification impact outlier detection performance compared to basic $k$-NN and weighted $k$-NN?
\end{RQ}
\begin{RQ}
\label{rq:compare}
How do our $\epsilon$-DP grid modifications compare to the non-private baselines (w)$k$-NN and our non-private grid modifications?
\end{RQ}
\begin{RQ}
\label{rq:privacybudgeteffect}
What is the effect of the privacy budget $\epsilon$ on the performance of our approach?
\end{RQ}

\subsection{Dataset}
\label{dataset}
We evaluated our approach using five benchmark real-world classification datasets that have been previousy used for evaluating outlier detection algorithms. We obtained all these datasets from the UCI Machine Learning Database \cite{Dua2019}. Statistics of all dataset are shown in Table \ref{tab:1}

\mpara{\lymph} The \lymph dataset represents lymphographical patient data and is divided  according to radiographical examination findings. Only six patients had findings \say{normal} or \say{fibrosis}. These are considered outliers. Three of the 17 measured attributes are continuous.

\mpara{\diabetes} The \diabetes dataset consists of diabetes test findings of Pima Indians and eight medical predictors, all being continuous. The outliers are the subjects, who were diagnosed positive. 

\mpara{\wdbc} The \wdbc dataset describes diagnostic findings for breast cancer. Nuclear characteristics are represented in 30 continuous attributes. Outliers are the malignant findings.

\mpara{\heart} The \heart dataset consists of patients classified as healthy or as having heart related problems. It has 13 continuous predictor variables. 

\mpara{\adult} The \adult is the only non-medical dataset. We included it, since it was used in \cite{okada2015a} for evaluation of a privacy preserving method for outlier detection. It contains seven socio-demographic continuous predictors and an income level class. The outliers are individuals, who had a yearly salary above \$50,000.

\begin{table}
\begin{center}
\caption{Statistics of the datasets used in experiments.}
\label{tab:1}    
\begin{tabular}{ lrrrrr  }
\hline\noalign{\smallskip}
 & \multicolumn{5}{c}{\textsc{Dataset}} \\
  \cline{2-6}\noalign{\smallskip}
 \textsc{Statistic} & \lymph  & \diabetes & \wdbc & \heart & \adult\\
 \hline\noalign{\smallskip}
 Dimension $d$ & 3 & 8 & 30 & 13 & 7 \\
 Size of reference set & 113 & 400 & 285 & 110 & 19,776 \\
 Size of test set &  35  & 140 & 82 & 38 & 5,044\\
 Outliers in test set  & 6  & 40 & 10 & 10 & 100\\

 \hline\noalign{\smallskip}
\end{tabular}
\end{center}
\end{table}

\subsection{Setup}
\label{sec:exp_setup}

Adhering to common practice \cite{schubert2014}, we treated the first $m$ instances of the minority class in each dataset as outliers. The value $m$ was chosen according to previous studies with these datasets \cite{zhang2009,campos2016}. Consequently, the outlier were downsampled in all datasets except the \lymph data.
The remaining data were labeled as inliers. The datasets and their statistics are shown in Table \ref{tab:1}. We only included continuous and dichtomous variables from the datasets.

For each dataset, 80\,\% of the inliers were used as training dataset, which was used by the algorithm as a reference set to identify outliers. The remaining 20\,\% were appended to set of outliers to build the test set. Pre-processing was applied to both datasets as described in Section \ref{sec:pre}. 

We ran our algorithm on each dataset with different values of $k$-NN parameter $k$, grid parameter $b$ and privacy budget $\epsilon$. After selecting appropriate values of $k$, we first varied the grid parameter $b$ in range $b\in[2,10]$, to examine its impact on both our non-private and $\epsilon$-DP grid modifications. From these results, we selected the smallest $b$ that maximized the AUROC of our $\epsilon$-DP grid modifications for each dataset. We used the selected parameters of $k$ and $b$ to evaluate our $\epsilon$-DP grid modifications (DP Grid $k$-NN and DP Grid w$k$-NN) for different values of $\epsilon$ in comparison to the non-private grid modifications and (w)$k$-NN.
The algorithm was run with values of $\epsilon \in \{5, 2.5, 1.25, .6, .3, .15, .075, .035, .015\}$ to evaluate its performance with respect to the privacy budget. 

To account for random variability in the private algorithms' performance due to the Laplace noise, both $\epsilon$-DP versions of Algorithm 1 were applied ten times on each dataset with different initial seeds for the random number generator. We report mean performance and standard deviation over the ten runs.

\begin{figure}
\includegraphics[width=\textwidth]{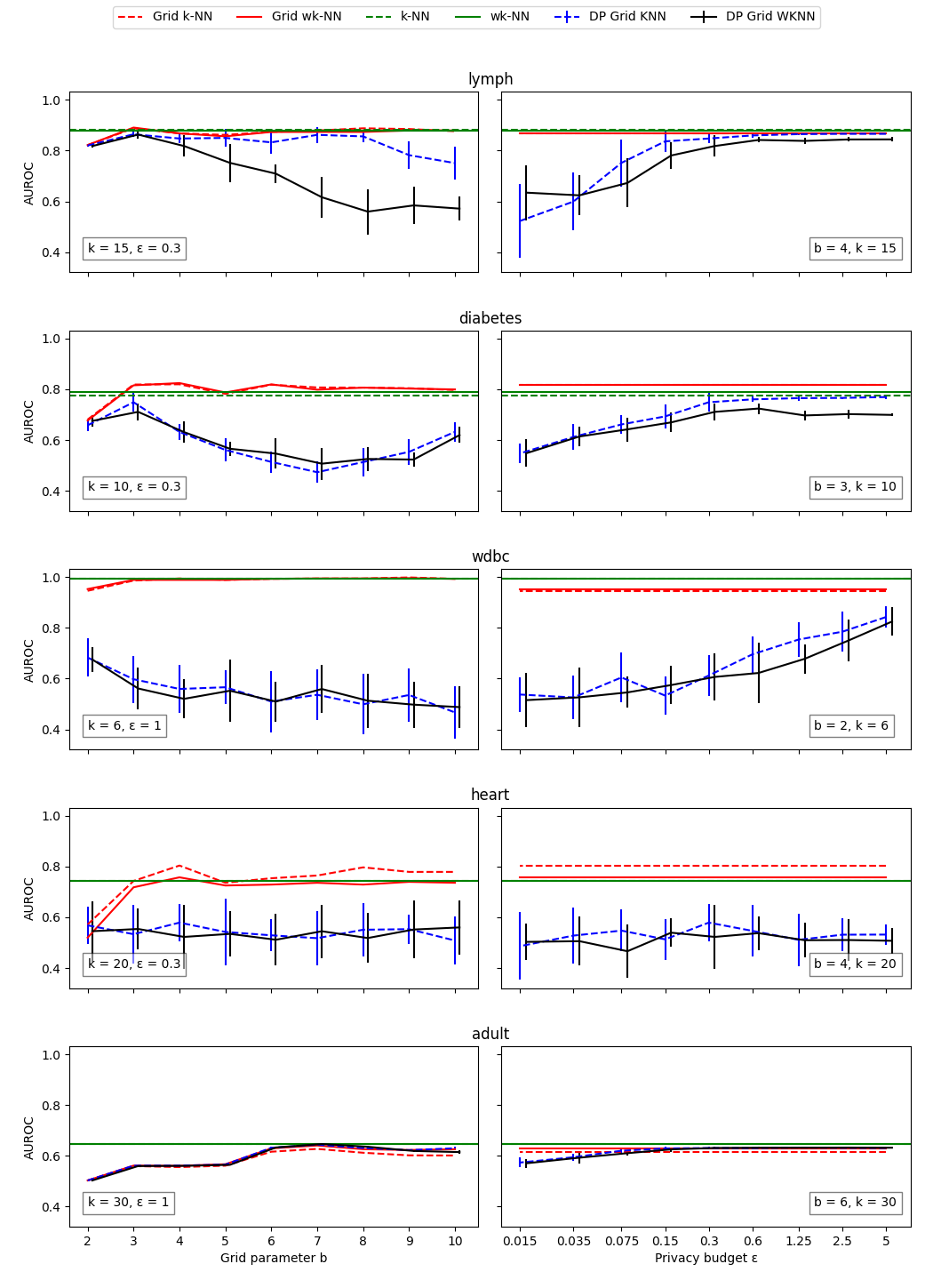}
\caption{Comparison of algorithms with respect to the AUROC for different grid partition parameters $b$ (left) and privacy budgets $\epsilon$ (right). } \label{fig:results1}
\end{figure}

\begin{figure}
\includegraphics[width=\textwidth]{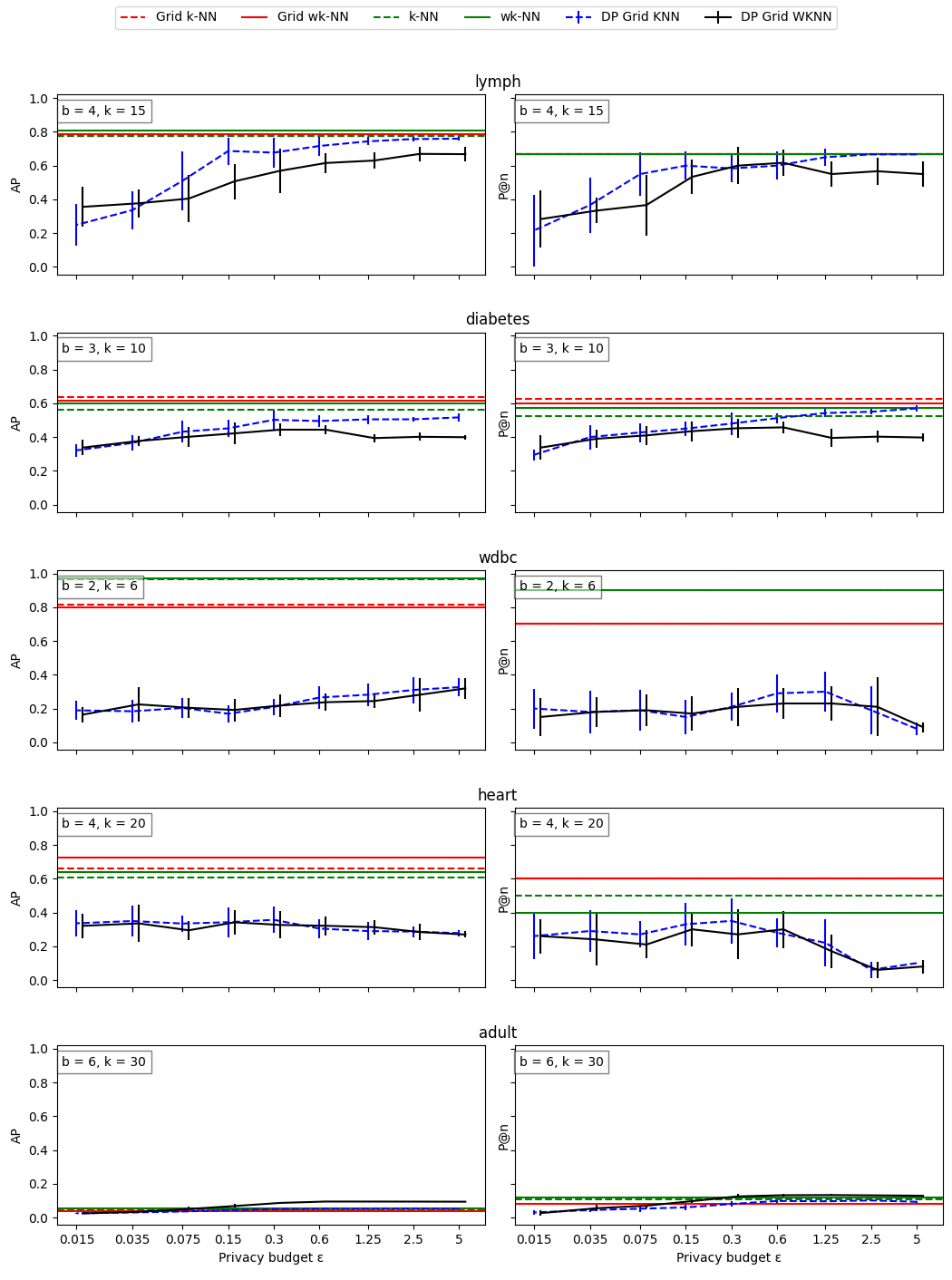}
\caption{Comparison of algorithms with respect to the average precision (AP, left) and the precision at $n$ ($P@n$, right) for different privacy budgets $\epsilon$.} \label{fig:results2}
\end{figure}

\subsection{Results} 
\label{sec:results}
We report outlier detection performance on the five datasets in terms of the AUROC, the average precision (AP) and the precision at $n$ ($P@n$) \cite{campos2016}. We  compare our approach to the four baselines, basic and weighted $k$-NN as well as the non-private grid modifications of these. For $P@n$, we set $n$ to the actual number of outliers in each test set. In case of our private approach, the $\epsilon$-DP grid modifications, mean and standard deviation (error bars) over all ten iterations of the algorithm for different seeds are reported for each dataset. The results are presented in Figures \ref{fig:results1} and \ref{fig:results2}.

\mpara{RQ \ref{rq:mod}.} The unmodified (w)$k$-NN algorithms performed in nearly all cases best, which is expected, because the modified versions introduce information loss (green lines). The \adult dataset was the only dataset, for which none of the algorithms showed acceptable performance. We present the results nonetheless, to document this anomalous case.

We found that our non-private grid modifications showed only a modest decline in performance for parameters $b = 2$ compared to unmodified (w)$k$-NN. 
Notably, in all but the \adult data even small grid parameters $b \geq 3$ yield mostly comparable results and sometimes even outperforms (w)$k$-NN (green vs. red lines in Figure \ref{fig:results1}, left). This might be explained by considering the grid mechanism as some kind of regularizer, which could help to eliminate irrelvant small deviations within the grid partitions that would otherwise impact the exact $k$-NN distances negatively.

\mpara{RQ \ref{rq:compare}}
For our private grid modifications, a decline in outlier detection performance is expected for larger values of $b$: at first, a larger $b$ increases data resolution, since the grid cells become smaller and hence the distance between the original data points and the cell centroids decreases. Nonetheless, the more cells are created, the more error is introduced by the Laplace Mechanism. This error largely outweighs the benefit of a better resolution, as can be seen in the performance decline in the \lymph and \wdbc datasets (blue and black lines in Figure \ref{fig:results1}, left). In the \adult dataset this relationship is reversed. However, this might be explained by the already large proportion of errors in the grid modifications, which seems unaffected by additional Laplace noise.


\mpara{RQ \ref{rq:privacybudgeteffect}}
For all datasets, our basic $\epsilon$-DP grid algorithm performed slightly better than our weighted version (blue vs. black lines). Our approach showed performance nearly comparable to the four non-private baselines for all metrics on the lower dimensional \lymph ($\epsilon=.15$), \diabetes ($\epsilon=.3$) and \adult ($\epsilon=.15$) data. For the higher dimensional \heart data, the $P@n$ of our private approach was nearly as good as (w)$k$-NN for $\epsilon=.3$. Finally, the AUROC for the \wdbc data, which had the most dimensions, converged towards the non-private baselines for $\epsilon=5$.
As expected, a larger privacy budget $\epsilon$ resulted in better performance. A smaller privacy budgets $\epsilon$ evidently introduces a larger amount of noise being added to the cell counts. This is reflected in the results for the \lymph, \diabetes and the AUROC of the \wdbc data, but not for the \heart or \adult data, and not for the AP and $P@n$ of the \wdbc data. This could be explained by the higher dimensionality of both \wdbc and \heart data, and the anomaly in \adult data. A larger number of dimensions results in a larger number of grid cells even for low values of $b$, and therefore additional noise by the Laplace Mechanism.

\section{Conclusion}
\label{sec:conclusion}
This is the first study, which combines $\epsilon$-differential privacy and $k$-NN based outlier detection. We proposed, analysed and evaluated a $\epsilon$-DP grid modification approach of basic and weighted $k$-NN outlier detection, which performs well on five real-world datasets for reasonably small privacy budgets. We found that our non-private grid modifications of both the basic and weighted $k$-NN result in no notable performance loss compared to unmodified (w)$k$-NN. Consequently, the expected impact on outlier detection performance in the private setting can be solely attributed to the privacy guarantees and does not result from data partitioning. The  results underline that the $\epsilon$-DP grid modification is a promising candidate for privacy preserving outlier detection and could give even stronger results, when applied under relaxed DP guarantees. 

\section*{Acknowledgments}
This work is in part funded by the Lower Saxony Ministry of Science and Culture under grant number ZN3491 within the Lower Saxony "Vorab" of the Volkswagen Foundation and supported by the Center for Digital Innovations (ZDIN), and the Federal Ministry of Education and Research (BMBF), Germany under the project LeibnizKILabor (grant number 01DD20003).

\bibliographystyle{splncs04}
\bibliography{dpknnpaper.bib}

\end{document}